\documentclass[a4paper,fleqn]{cas-sc}

\usepackage[authoryear,longnamesfirst]{natbib}

\usepackage{subfigure}

\usepackage{graphicx}
\usepackage{amssymb}
\usepackage{amsfonts,mathtools}
\usepackage{amsmath}
\usepackage{amsthm}
\newtheorem{theorem}{Theorem}
\newtheorem{corollary}{Corollary}[theorem]

\usepackage{longtable,tabularx}

\usepackage{cleveref}

\usepackage[table,dvipsnames]{xcolor}

\usepackage[listings,skins,breakable]{tcolorbox}
{%
\end{tcolorbox}\endgroup}

{%
\end{tcolorbox}\endgroup}

\newtcbox{\popovnotesmall}{breakable,enhanced jigsaw,nobeforeafter,tcbox raise base,boxrule=0.4pt,top=0mm,bottom=0mm,
  right=0mm,left=4mm,arc=1pt,boxsep=2pt,before upper={\vphantom{dlg}},
  colframe=Fuchsia!75!black,coltext=black,colback=Fuchsia!20,
  overlay={\begin{tcbclipinterior}\fill[Fuchsia!75!black] (frame.south west)
    rectangle node[text=white,font=\sffamily\bfseries\tiny,rotate=90] {AAP} ([xshift=4mm]frame.north west);\end{tcbclipinterior}}}
\MakeRobust\popovnotesmall
\ifdefined
\fi

{%
\end{tcolorbox}\endgroup}

\def\*#1{\boldsymbol{\mathbf{#1}}}
\def\##1{\bm{\mathsf{#1}}}

\usepackage{newunicodechar,graphicx}
\usepackage{svg}

\DeclareRobustCommand{\okina}{%
  \raisebox{\dimexpr\fontcharht\font`A-\height}{%
    \scalebox{0.8}{`}%
  }%
}
\newunicodechar{ʻ}{\okina}
\usetikzlibrary{positioning}

\usepackage{tikz}
\usetikzlibrary{calc}
\usetikzlibrary{math}

\DeclarePairedDelimiterX{\norm}[1]{\lVert}{\rVert}{#1}

\newcommand{\appropto}{\mathrel{\vcenter{
  \offinterlineskip\halign{\hfil$##$\cr
    \propto\cr\noalign{\kern2pt}\sim\cr\noalign{\kern-2pt}}}}}

\usepackage{pgfplots}
\pgfplotsset{compat=1.18} 
\usepackage{pgfplotstable}
\usepackage{cbcolors}

\newtheorem{remark}{Remark}

\def\tsc#1{\csdef{#1}{\textsc{\lowercase{#1}}\xspace}}
\tsc{WGM}
\tsc{QE}
\tsc{EP}
\tsc{PMS}
\tsc{BEC}
\tsc{DE}
\begin{document}
\let\WriteBookmarks\relax
\def\floatpagepagefraction{1}
\def\textpagefraction{.001}
\shorttitle{Learning Discriminators for EnGMF}
\shortauthors{Z. Jabbar et~al.}
%\begin{frontmatter}

\title [mode = title]{Learning Discriminators for Resampling in the Ensemble Gaussian Mixture Filter through a Normalizing Flow Approach}                      
\tnotemark[1,2]

\tnotetext[1]{This paper was funded in part by start up funds from the University of Hawai'i}

%\tnotetext[2]{The second title footnote which is a longer text matter
%   to fill through the whole text width and overflow into
%   another line in the footnotes area of the first page.}

\author[1]{Zain Jabbar}[orcid=0009-0003-2727-2313]
\cormark[1]
\ead{zjabbar@hawaii.edu}
%\ead[url]{zjabbar@hawaii.edu}

\credit{Methodology, Software, Writing}

\affiliation[1]{organization={Department of Mathematics, The University of Hawaii at Manoa},
                addressline={2565 McCarthy Mall}, 
                city={Honolulu},
                postcode={96822}, 
                state={Hawaii},
                country={USA}}

\author[2]{Andrey A. Popov}[orcid=0000-0002-7726-6224]
\cormark[2]
\ead{apopov@hawaii.edu}
%\ead[url]{apopov@hawaii.edu}
\credit{Original Idea, Methodology, Writing}

\affiliation[2]{organization={Department of Information and Computer Sciences, The University of Hawaii at Manoa},
                addressline={1680 East-West Road}, 
                city={Honolulu},
                postcode={96822}, 
                state={Hawaii},
                country={USA}}

\cortext[cor1]{Corresponding author}
\cortext[cor2]{Principal corresponding author}

\begin{abstract}
The ensemble Gaussian mixture filter (EnGMF) is a powerful, convergent particle filter capable of medium-to-high dimensional non-linear filtering. The EnGMF relies on a resampling step that can generate physically unrealistic posterior samples, that would subsequently produce physically meaningless forecasts.
This work introduces the discriminator-informed resampling procedure, that augments the posterior resampling step with a discriminator that accepts or rejects candidate particles based on their physical plausibility.
In this work these discriminators are learned through a normalizing flow approach. 
Numerical experiments on both the Ikeda map and the Lorenz '63 system show that discriminator informed resampling procedure consistently reduces error relative to the standard EnGMF in low-ensemble regimes.
\end{abstract}

\begin{graphicalabstract}
\includegraphics[width=0.9\linewidth]{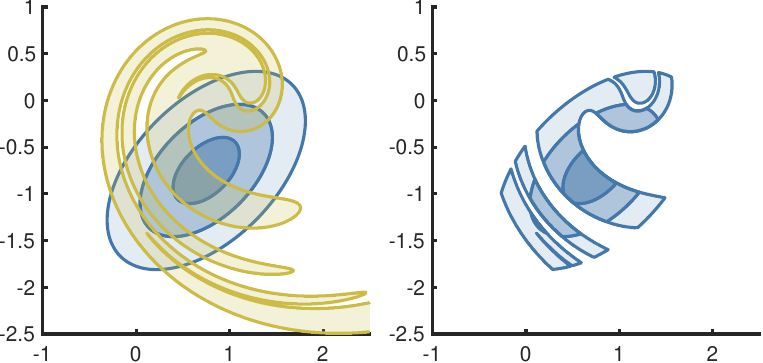}
\end{graphicalabstract}

\begin{highlights}
\item The method herein preserves the EnGMF update while rejecting non-physical particles.
\item Physicality information is incorporated directly into the Bayesian posterior.
\item Normalizing flows provide a tractable discriminator for EnGMF resampling.
\end{highlights}

\begin{keywords}
ensemble Gaussian mixture filter \sep normalizing flow \sep resampling \sep physical consistency
\end{keywords}

\maketitle

\section{Introduction}

The ensemble Gaussian mixture filter (EnGMF)~\citep{anderson1999monte,liu2016efficient,popov2022adaptive,yun2022kernel} is a particle filtering methodology for nonlinear state estimation~\citep{sarkka2023bayesian,asch2016data,reich2015probabilistic,jazwinski2007stochastic}.
It approximates exact Bayesian filtering and converges in distribution to the exact Bayesian solution as the number of particles increases, while requiring substantially fewer particles than the standard bootstrap particle filter. The EnGMF has demonstrated success in challenging astrodynamics applications, including single-target~\citep{yun2022kernel,durant2024you,durant2023mcmc} and multi-target tracking~\citep{Reifler2023a}, where nonlinearities and highly non-Gaussian distributions are prevalent.

The EnGMF proceeds by first constructing a kernel density estimate (KDE)~\citep{silverman2018density} of the forecast ensemble, which yields a nonparametric approximation of the prior distribution. 
This prior KDE is then updated via a Gaussian sum (mixture) update~\citep{alspach1972nonlinear} to incorporate the latest observations, resulting in an analytically tractable posterior Gaussian mixture model.
A resampling step is finally performed by drawing a new ensemble from this posterior mixture, thereby realizing an updated set of particles that approximate the posterior filtering distribution.

In the special case where the system dynamics are known exactly and free of model error, the bootstrap particle filter has the desirable property that all propagated particles lie within the set of trajectories that are dynamically feasible, which we refer to as \textit{physical} realizations~\citep{gordon1993novel}. 
In contrast, the EnGMF \textit{does not} inherently preserve this property. 
Because the posterior ensemble is generated by resampling from a Gaussian mixture that is not constrained to the image of the dynamical model, the EnGMF can produce samples that cannot be realized by the true dynamics and are therefore \textit{non-physical}.
This loss of physical consistency can be particularly undesirable in applications where adherence to the underlying physics is critical, such as long-term orbit propagation or conjunction assessment.

To address this discrepancy, this work augments the EnGMF with a learned discriminator of physical plausibility.
Specifically, we construct a discriminator model based on a normalizing flow~\citep{papamakarios2021normalizing, kobyzev2020normalizing}, trained on ensembles of dynamically consistent trajectories, to approximate the distribution of physically realizable states. 
The normalizing flow provides an expressive, invertible mapping from a simple base density to the manifold of physical states, enabling efficient and exact likelihood evaluation for candidate samples. 
We then embed this discriminator within a rejection-sampling framework applied to the posterior Gaussian mixture: samples from the mixture that are assigned low likelihood under the learned flow-based density are classified as non-physical and rejected, while high-likelihood samples are retained. 
In this way we can enforce the dynamical consistency of the EnGMF ensemble without altering the core Gaussian mixture update, yielding a physically constrained variant of the EnGMF that retains its favorable statistical properties while eliminating the sampling of non-physical particles.

This paper is organized as follows:
we first go over the background about physical samples, the ensemble Gaussian mixture filter, and why resampling can be non-physical in~\cref{sec:background}.
We next provide theoretical motivation and practical implementation on how to preserve physicality in the resampling step of the EnGMF in~\cref{sec:DI-EnGMF}.
Penultimately in~\cref{sec:Ikeda-map} we showcase our methodology on the Ikeda map.
We end with some final remarks in~\cref{sec:conclusions}.

%%%%%%%%%%%%%%%%%%%%%%%%%%%%%%%%%%%%%
\section{Background and motivation}
\label{sec:background}
%%%%%%%%%%%%%%%%%%%%%%%%%%%%%%%%%%%%%

We consider the problem of quantifying uncertainty in the state $x_k$ of a chaotic dynamical system at discrete time index $k$, governed by the model,
\begin{equation}\label{eq:dynamics}
    x_k = \mathcal{M}(x_{k-1}),
\end{equation}
where $\mathcal{M}$ denotes the known dynamics.
For the purposes of this work, we assume that $\mathcal{M}$ is an exact representation of the underlying physics, i.e. we neglect model error and treat the dynamics as a perfect representation of the true evolution of the system state. 
In many applications of interest, such as astrodynamics and space situational awareness, this corresponds to the regime in which the dominant sources of uncertainty arise from initial conditions and measurements rather than from inexactness in the dynamical model.

The system is observed through noisy measurements of the form,
\begin{equation}\label{eq:measurement}
    y = h(x) + \eta, \,\, \eta \sim \mathcal{N}(0, R),
\end{equation}
where $h$ is a (potentially nonlinear) measurement operator mapping the state space to the measurement space, and $\eta$ denotes additive unbiased Gaussian noise with known covariance matrix $R$. 
The Gaussian assumption is adopted here for notational simplicity, but we note that the framework can be extended in a straightforward manner to Gaussian mixture noise as in~\citep{popov2022adaptive}. 

Let $X^-$ denote the random variable representing our prior (forecast) knowledge of the state at the current time, obtained by propagating an ensemble of initial conditions through the dynamical model $\mathcal{M}$ in~\cref{eq:dynamics}. 
Given a realized measurement $y$ from~\cref{eq:measurement}, our goal is to update this prior to a posterior that incorporates the information provided by the measurement, which in the context of Bayesian inference is typically expressed as
\begin{equation}\label{eq:old-Bayes}
    P(X^- | Y = y) \,\propto\, P(Y = y | X^-)P(X^-),
\end{equation}
where $P(X^-)$ denotes the prior distribution of our uncertainty in the state, $P(Y = y | X^-)$ is the likelihood of measuring $y$, and $P(X^- | Y = y)$, and $P(X^- | Y = y)$ is our posterior uncertainty.
For right now, we ignore any additional conditioning on other known extant information, and interpret \eqref{eq:old-Bayes} in the sense of~\citet{jaynes2003probability}.

A mathematical model such as~\eqref{eq:dynamics} is a meant to model reality. 
It is typically designed to produce \textit{meaningful predictions from meaningful inputs} that are consistent with the underlying physical reality, but it may yield \textit{meaningless outputs from meaningless inputs} that violate the physical reality.
If we have a deterministic, exact model $\mathcal{M}$, only those states that arise from valid initial conditions constitute feasible, or `physical' realizations of the system. 
Conversely states that cannot be obtained from other physical states no matter what initial condition is chosen are `non-physical'.

In many application domains this notion of physicality is both critical and problem dependent. 
In robotics, for example, the reachable set defines the subset of the space that can be attained under actuator, kinematic, and environmental constraints~\citep{bader2020maximizing,berenson2009manipulation,zacharias2007capturing}.
In aerospace applications, reachable sets characterize those states that can be achieved from a given initial condition under the governing equations of motion and admissible control inputs~\citep{natherson2024reachable,gillula2011applications,holzinger2009reachability}.
In the theory of dynamical systems, invariant sets, attractors, and related geometric structures delineate regions of state space that are preserved or recurrent under the flow of the dynamics~\citep{song2015optimization,song2022positive}. 

From this sort of perspective, it is therefore desirable to restrict our attention to posterior distributions that concentrate their mass on physically realizable states. 
That is, the particles used to represent $P(X^- | Y = y)$ should consist of inputs to the model that are physical.
Conversely, particles that are non-physical should be identified and rejected. 
In the remainder of this work, we formalize this notion of physicality, and develop mechanisms to (weakly) enforce that the EnGMF produces precisely these physical particles.

\subsection{Discriminator}

Consider a random variable $D$ that encodes our information about the physicality, as defined in the previous section, of any given state $x$. 
We denote by $\mathbb{X}\subset \mathbb{R}^n$ the set of physical states, that is, those states that are consistent with the underlying dynamical model. 
In practice, our knowledge of $\mathbb{X}$ is inexact, in the sense that we may be able to identify certain states as clearly physical or clearly non-physical, while remaining uncertain about a potentially large subset of the state space.

To formalize this notion, we introduce a discriminator function,
\begin{equation}\label{eq:discriminator}
    \mathcal{D} : \mathbb{R}^n \to \{0, 1\},
\end{equation}
which maps a state $x\in\mathbb{R}^n$ to an indicator of its physical plausibility. We interpret $D(x) = 1$ as the assertion that $x$ is physical, and $D(x)=0$ as the assertion that $x$ is non-physical. 
More generally, if we were to we relax the co-domain to $[0,1]$, $D$ could be viewed as our belief or plausibility that the state $x$ is physical. 
In other words, the discriminator summarizes our current knowledge about whether a given state is physical.

Ideally our information $D$ about physicality would be exact, in the sense that the support of $D$ would precisely be all physical states, $\Omega_D = \mathbb{X}$. 
In this case, the discriminator would reduce to the exact indicator function of $\mathbb{X}$,
\begin{equation}\label{eq:exact-discriminator}
    \mathcal{D}(x) = \begin{cases}
        1 & x \in \Omega_D\\
        0 & \text{sonst}
    \end{cases},
\end{equation}
which perfectly partitions the state space into physical and non-physical regions. 
Such a discriminator provides an unambiguous criterion for physicality. Meaning that, any state for which $D(x)=0$ c bane rejected outright as non-physical, while any state with $D(x)=1$ can be accepted as physical.

\begin{remark}[Distribution on $D$]
    Information contained in $D$ does not necessarily have a well-defined probability distribution. If $\Omega_D$ is compact then a uniform distribution on $\Omega_D$ might represent equivalent information to that described by $\mathcal{D}$ in~\cref{eq:exact-discriminator}, but this is not necessarily the case.
    It is also important to note that the distribution of $D$ would then directly depend on its parametrization~\citep{price2002uninformative}, which is not necessarily ideal.
    It is also possible to think about the reverse, that all the information $D$ is fully defined by our discriminator~\cref{eq:exact-discriminator}, and that the probability distribution of that information is of ancillary importance. 
\end{remark}

%%%%%%%%%%%%%%%%%%%%%%%%%%%%%%%%%%%%%
\subsection{Ensemble Gaussian Mixture Filter}
%%%%%%%%%%%%%%%%%%%%%%%%%%%%%%%%%%%%%

In this work, we focus on the ensemble Gaussian mixture filter (EnGMF) as an approximation scheme for Bayesian inference in~\cref{eq:old-Bayes}. 
The EnGMF belongs to the class of particle filtering methods, in the sense that it represents uncertainty by a finite ensemble of discrete state realizations which collectively approximate a (potentially) non-Gaussian probability distribution. 
In contrast to classical bootstrap particle filters, however, the EnGMF constructs an explicit Gaussian mixture approximation to the prior and posterior densities, thereby enabling analytically tractable updates while retaining the ability to capture multimodality and pronounced non-Gaussian features.
Algorithmically, the EnGMF consists of three main stages: i) construction of a kernel density estimate (KDE) of the prior distribution from the ensemble, ii) application of a Gaussian sum update to incorporate the new measurement information, and iii) resampling from the resulting posterior Gaussian mixture to obtain an updated ensemble. 

Let $E_{X^-} = [x_1^-, x_2^-, \dots x_N^-]$ denote an ensemble of $N$ particles representing our prior $X^-$. 
The prior density in the ensemble Gaussian mixture filter is approximated by a kernel density estimate (KDE),
\begin{equation}\label{eq:prior-KDE}
    p_{X^-}(x) \approx \sum_{i=1}^N \frac{1}{N} \mathcal{N}\left(x ; x_i^-, \beta^2_N \Sigma\right),
\end{equation}
using standard techniques~\citep{silverman2018density}, where $\beta^2_N$ is the Silverman bandwidth , defined later, and $\Sigma$ is the ensemble covariance, approximated by,
\begin{equation}
    \begin{aligned}
        \Sigma \approx \frac{1}{N-1}E_{X^-}\left(I_N - \frac{1}{N}1_N 1_N^T\right)E_{X^-}^T,
    \end{aligned}
\end{equation}
where $I_N$ is the identity matrix of size $N\times N$ and $1_N$ is a column vector of ones.
The KDE in~\cref{eq:prior-KDE} yields a smooth, nonparametric approximation of the prior density that becomes increasingly accurate as we increase the number of particles $N$.

The prior density estimate is then conditioned on the measurement $y$ from~\cref{eq:measurement} via a Gaussian sum update, leading to an approximate posterior density of the form,
\begin{equation}\label{eq:posterior-KDE}
    \begin{aligned}
        p_{X^+}(x) &\approx \sum_{i=1}^N w_i\,\mathcal{N}(x ; x_i^\sim, \Sigma_i^\sim),\\
        x^\sim_{i} &= x^-_i - G_i\left(h(x^-_i) - y\right),\\
        \Sigma^\sim_i &= \left(I -  G_i H_i^T\right)\beta_N^2\Sigma^-_i,\\
        G_i &= \beta^2_N\Sigma^-_i H_i^T{\left(H_i \beta^2_N\Sigma^-_i H_i^T + R\right)}^{-1},\\
        w_{i} &\,\propto\,\,  \mathcal{N}\left(y\,;\, h(x^-_i),\, H_i\beta^2_N\Sigma^-_i H_i^T + R\right),\\
        H_i &= \left.\frac{d h}{d x}\right\rvert_{x = x^-_i},
    \end{aligned}
\end{equation}
where $x^\sim_{i}$ are the posterior mixture means, $\Sigma_i^\sim$ are the covariances, and $w_i$ are the weights. The matrices $G_i$ are analogous to local Kalman gains, defined separately for each kernel component based on a linearization of the observation operator $h$ around $x^-_i$. 
We emphasize that $x^\sim_{i}$ are not themselves posterior samples, but rather the means of the Gaussian mixture components from which the posterior ensemble will subsequently be drawn.

The bandwidth factor $\beta^2_N$ in~\cref{eq:prior-KDE} controls the amount of smoothing applied to the ensemble of critical importance to the quality of the KDE approximation. 
For Gaussian target distributions and Gaussian kernels, the asymptotically optimal (in the sense of minimizing the mean integrated squared error) choice is given by the Silverman bandwidth,
\begin{equation}\label{eq:Silverman-bandwidth}
    \beta^2_N = s_\beta \left(\frac{4}{N(n + 2)}\right)^{\frac{2}{n + 4}},
\end{equation}
where $n$ is the dimension of the system, and $s_\beta$ is a scaling factor that can be tuned for more optimal performance on non-Gaussian distributions.

\begin{remark}[Optimal bandwidth scaling]
For samples drawn from a Gaussian distribution, the choice $s_\beta=1$ in~\cref{eq:Silverman-bandwidth} is asymptotically optimal.
In practice, however, ensemble-based filters frequently encounter non-Gaussian and even multimodal distributions, for which the optimal bandwidth may be smaller or larger than the Gaussian-optimal value.
In many applications of interest, taking $s_\beta<1$ can help mitigate over-smoothing, though this is largely a guideline and an ideal optimal choice is always desirable.
\end{remark}

%%%%%%%%%%%%%%%%%%%%%%%%%%%%%%%%%%%%%
\subsection{EnGMF Resampling can be non-physical}
\label{sec:naive-resampling}
%%%%%%%%%%%%%%%%%%%%%%%%%%%%%%%%%%%%%

We now turn to the resampling procedure associated with the EnGMF posterior approximation in~\cref{eq:posterior-KDE}. 
Given the Gaussian mixture representation of the posterior density, a new ensemble of size $N$ is generated by drawing samples therefrom.
The standard procedure is as follows: for each new posterior particle $x_i^+$ with index $i=1,\dots,N$,
\begin{enumerate}
    \item Draw an index $\lambda\in\{1,\dots,N\}$ from the discrete distribution defined by the mixture weights $\{w_i\}_{i=1}^N$, then,
    \item conditional on $\lambda$, draw a posterior particle $x^+_i \sim \mathcal{N}(x^\sim_\lambda, \Sigma^\sim_\lambda)$. 
\end{enumerate}
This construction yields an ensemble $E_{X^+} = [x_1^+, x_2^+, \dots x_N^+]$ that is approximately independent, up to effects from the covariances and weights themselves being dependent samples, and identically distributed from the posterior Gaussian mixture in~\cref{eq:posterior-KDE}. 
As is resampling in classical particle filters, this step replaces the weighted mixture representation with an unweighted ensemble while preserving, in expectation, the posterior density.
This can be thought of as a continuous to discrete transition~\citep{hanebeck2025ensemble}.

The main motivation of this work is that the above resampling procedure can generate posterior samples that are non-physical in the sense introduced earlier. 
The issue has two distinct aspects. 
First, each Gaussian component of the mixture has unbounded support in $\mathbb{R}^n$, so that, in principle, samples can be drawn arbitrarily far from any physically feasible region of the state space. 
Second, even when samples are drawn close to the mean, there is still no guarantee that they are physical, as it is possible for the region of physicality to be either disconnected, lower dimensional or even have fractal structure. 
The first issue can be partially mitigated by replacing Gaussian kernels with more compactly supported or otherwise better-tailored non-Gaussian kernels, such as the Epanechnikov kernel, as in~\citep{popov2024ensemble}. 
The second issue, however, is more fundamental, as it stems from the fact that the Gaussian mixture approximation itself is constructed in state space without explicit enforcement of physicality.

A straightforward, but computationally infeasible, strategy to reduce the non-physicality of the  particles is to increase the number of particles $N$. As $N\to\infty$, the Silverman bandwidth $\beta_N^2$ in~\cref{eq:Silverman-bandwidth} tends to zero, leading to kernels that approach the dirac $\delta$-distribution, which means that the support of the approximated prior approaches the outputs of the model $\mathcal{M}$ in~\cref{eq:dynamics}, which we assume is physical.
In this limit, the Gaussian mixture approximation can therefore recover the exact prior in distribution, and the Gaussian sum update would, in distribution recover the exact Bayesian posterior, thereby producing posterior particles that are physical. 
However, as this would recover the bootstrap particle filter, this eliminates the computational advantage that the EnGMF provides, therefore this strategy is not considered to be feasible.

In light of these considerations, we seek an alternative approach to generating physical samples that do not rely solely on increasing $N$. 
The approach developed in this work is to augment the standard EnGMF resampling step with a learned discriminator of physicality.
Rather than accepting all samples drawn from the posterior Gaussian mixture, we employ a discriminator, which we later implement using a normalizing flow, to assess the physicality of candidate samples.
Samples deemed non-physical according to this discriminator are rejected, yielding a modified resampling scheme that generates physically consistent particles while maintaining the efficiency of the EnGMF.

%%%%%%%%%%%%%%%%%%%%%%%%%%%%%%%%%%%%%
\section{Discriminator Informed Ensemble Gaussian Mixture Filter}
\label{sec:DI-EnGMF}
%%%%%%%%%%%%%%%%%%%%%%%%%%%%%%%%%%%%%

A discriminator can be viewed as encoding additional information about the state of the dynamical system beyond what is captured by our prior and our likelihood, but that is known to the agent performing the inference (whether it be human or machine). 
In Bayesian inference, all available and relevant information should be incorporated into the inference~\citep{jaynes2003probability}. 
Thus, if we possess any extra knowledge about the physicality of a state, such as that obtained by a discriminator, it has to be incorporated into our inference. We therefore seek a formulation of the EnGMF that explicitly accounts for this discriminator information.

Assume we are given access to information about some physicality through the random variable $D$ and its associated discriminator $\mathcal{D}$ from~\cref{eq:discriminator}. 
Bayesian inference for our prior $X^-$ with the measurement $Y=y$ in~\cref{eq:measurement} and the physicality information $D$ is given by,
\begin{equation}\label{eq:new-Bayes}
    P(X^- | Y = y, D) \,\propto\, P(Y = y | X^-, D) P(X^- | D) P(D),
\end{equation}
which augments the standard Bayesian inference~\cref{eq:old-Bayes} by conditioning explicitly on the discriminator information.
It is reasonable to assume that the information about physicality is statistically independent of both the measurement $y$ and our prior $X^-$, so that
\begin{equation}
    P(X^- | Y = y, D) \,\propto\,  P(Y = y | X^-) P(X^-) P(D).
\end{equation}
Under this independence assumption, the role of $D$ is to restrict or re-weight the support of the posterior after the standard Bayesian inference has been carried out. 
In the context of the EnGMF, this implies that the information about physicality should be applied as a correction to the Gaussian mixture posterior density obtained from the Gaussian sum update in~\cref{eq:posterior-KDE}, rather than modifying the likelihood or prior directly.

Accordingly, we can modify the posterior kernel density estimate to incorporate the discriminator as a multiplicative factor,
\begin{equation}\label{eq:new-posterior}
    p_{X^+}(x) \appropto \mathcal{D}(x)\sum_{i=1}^N w_i\,\mathcal{N}(x ; x_i^\sim, \Sigma_i^\sim),
\end{equation}
where the proportionality hides a normalizing constant ensuring that the right-hand side integrates to one. 
In this formulation, the discriminator $\mathcal{D}(x)$ acts as a mask that eliminates probability mass assigned to non-physical states, while leaving the base structure of the Gaussian mixture update intact.
In the next section, we develop a practical resampling scheme for drawing particles from~\cref{eq:new-posterior}.

%%%%%%%%%%%%%%%%%%%%%%%%%%%%%%%%%%%%%
\subsection{Resampling with a discriminator}
%%%%%%%%%%%%%%%%%%%%%%%%%%%%%%%%%%%%%

To implement sampling from~\cref{eq:new-posterior} within the EnGMF framework, a natural strategy is to augment the standard resampling step with an explicit accept–reject step conditioned by the discriminator. 
Rather than unconditionally drawing posterior particles from the posterior Gaussian mixture approximation~\cref{eq:posterior-KDE}, candidate samples are first generated by the usual EnGMF resampling procedure and then are either accepted or rejected based on their physicality.
This modification yields a sampling scheme that is consistent with the discriminator-weighted posterior while preserving the overall structure of the original filter.

Specifically, for each desired posterior ensemble member with index $i=1,\dots,N$, the discriminator-informed resampling is defined as follows:
\begin{enumerate}
    \item Draw an index $\lambda\in\{1,\dots,N\}$ from the discrete distribution defined by the posterior mixture weights $\{w_i\}_{i=1}^N$ from~\cref{eq:posterior-KDE},
    \item generate a candidate particle $x^{+}_i \sim \mathcal{N}(x^\sim_\lambda, \Sigma^\sim_\lambda)$, and,
    \item \textbf{accept the candidate $x^{+}_i$ with probability $\mathcal{D}(x^{+}_i)$, otherwise, reject the sample and repeat the procedure until acceptance.}
\end{enumerate}
Above, the Gaussian mixture posterior produced by the EnGMF serves as the proposal distribution, and the discriminator $\mathcal{D}(x)$ defines an acceptance function that eliminates probability density for non-physical candidates.
As a result, the ensemble of accepted particles is distributed according to a density proportional to $\mathcal{D}(x)$ times the EnGMF posterior mixture which is exactly what is desired in~\cref{eq:new-posterior}.
We refer to this new algorithm as the discriminator-informed ensemble Gaussian mixture filter (DI-EnGMF).

This modification enforces physicality at the resampling step, as opposed to altering the Gaussian sum update~\cref{eq:posterior-KDE} or the underlying dynamical model~\cref{eq:dynamics}.
In particular, the DI-EnGMF eliminates posterior particles that are deemed non-physical by the discriminator, thereby ensuring that meaningful inputs are passed into the dynamics, resulting in future meaningful outputs. 
At the same time, the computational complexity of the method remains comparable to that of the standard EnGMF, with the resampling step being the only step with an increase in the computational cost.

\begin{remark}[Numerical convergence and practical safeguards]
The efficiency of the accept–reject procedure depends largely on the overlap between the EnGMF posterior mixture and the support that the discriminator identifies as physical. 
When this overlap is high, the acceptance rate remains high and the additional computational overhead is minimal. 
On the other hand, if the posterior mixture is highly non-physical, repeated rejections may lead to slow convergence or, in extreme cases, it might even be numerically intractable to resample $N$ particles.
In the spirit of robustness, several practical modifications can be introduced. 
First, we may allow a small baseline acceptance probability for particles that are not in the support of the discriminator, which would also help mitigate mismatches between what is actually physical and what the discriminator deams to be physical.
Second, we can impose a maximum number of rejections per particle: once this threshold is exceeded, the current candidate particle can be either be accepted or accepted with some medium to high probability. 
In the worst possible case where all candidates are accepted regardless of the discriminator deaming them to be non-physical, the DI-EnGMF reduces to the original EnGMF resampling procedure.
\end{remark}

%%%%%%%%%%%%%%%%%%%%%%%%%%%%%%%%%%%%%%%%%%
\section{Dynamical system examples}
%%%%%%%%%%%%%%%%%%%%%%%%%%%%%%%%%%%%%%%%%%

Two dynamical systems are used as numerical examples in this work: the Ikeda map and the Lorenz '63 system. 
Both exhibit low-dimensional chaotic dynamics and possess well-defined attractors which are used as surrogates for defining physicality in a reasonable test-case.

\subsection{Ikeda map}
\label{sec:Ikeda-map}

The Ikeda map is a two-dimensional discrete-time dynamical system originally introduced by Kensuke Ikeda~\citep{ikeda1979multiple, ikeda1980optical, fm2024mapping} as a model for the behavior of light in a nonlinear optical resonator. 
It depends on a scalar bifurcation parameter $u$ and is defined by the non-linear map,
\begin{equation}\label{eq:Ikeda-map}
    \begin{aligned}
        x_{1, n+1} &= 1 + u \left[x_{1, n} \cos(t_n) - x_{2, n} \sin(t_n)\right],\\
        x_{2, n+1} &= u \left[x_{1, n} \sin(t_n) + x_{2, n} \cos(t_n)\right],\\
        &\text{with } t_{n} = 0.4 - \frac{6}{1 + x_{1, n}^2 + x_{2, n}^2},
    \end{aligned}
\end{equation}
which we assume we know exactly. 
We denote one iteration of the Ikeda map using the notation,
\begin{equation}
    \mathcal{I}(x_{1, n}, x_{2, n}) = (x_{1, n + 1}, x_{, n + 1}),
\end{equation}
where $\mathcal{I}$ is one iteration of the map.

Although the Ikeda map was originally derived in an optical context, its low dimension, nonlinear dynamics, rich bifurcation structure, sensitivity to initial conditions, and the existence of strange attractors for appropriate values of $u$ make it an ideal test case for this work.
The map is computationally cheap, invertible, and has a known easily computable form of its attractor, which we take to be physical in its context.
As the Ikeda map~\cref{eq:Ikeda-map} is also two-dimensional, it is also easy to visualize.

\subsection{Ikeda map attractor}

An attractor is a subset of the state space toward which trajectories from a surrounding neighborhood converge under the dynamics and which is invariant under the map~\citep{teschl2021ordinary}. 
For the Ikeda map, the attractor represents the region of long-term behavior which we take to be the physical subset of the state space.
Once a trajectory has approached the attractor, subsequent applications of the map~\cref{eq:Ikeda-map} remain on or near it.
This means that states far from the attractor correspond to non-physical, and states near the attractor are physical.

For the parameter value $u=0.9$, the Ikeda map exhibits chaotic behavior and a chaotic attractor.
We now describe the attractor: consider the ball $\mathcal{B}_r$ of radius $r = \sqrt{1/(1 - u)}$ centered at the origin. 
Denote by $\mathcal{I}^m$, the $m$-fold composition of the Ikeda map~\cref{eq:Ikeda-map}.
It is known~\citep{osipenko2006dynamical} that the attractor of the Ikeda map is,
\begin{equation}\label{eq:Ikeda-attractor}
    \Lambda = \bigcap_{m=0}^\infty \mathcal{I}^m(\mathcal{B}_r).
\end{equation}
We state several properties of this attractor by numerical experiment. For the Ikeda map~\cref{eq:Ikeda-map}, the Kaplan-Yorke dimension~\citep{kaplan1979functional} is approximately $1.7067$. The Minkowski–Bouligand dimension~\citep{wagon2010computational} is approximately $1.7172$. The correlation dimension~\citep{grassberger1983measuring} is estimated to be between $1.6713$ and $1.7318$.
This means that the attractor is almost, but not quite 2-dimensional, meaning that the physical points are not neatly described by some space. For the Lorenz '63 system~\cref{sec:lorenz63-system}, the Kaplan-Yorke dimension is approximately 2.06~\citep{taylor2010attractors}.

\subsection{Theory-driven numerical discriminator for the Ikeda map}
\begin{figure}
    \centering
    \includegraphics[width=0.9\linewidth]{figures/ikedacut2.pdf}
    \caption{Left: an outline of the approximated discriminator with $m=6$ along with a 3-$\sigma$ plot of a Gaussian distribution. Right: A 3-$\sigma$ plot of the discriminator-informed Gaussian distribution showing physical regions and their relative density.}
    \label{fig:Ikeda-map-and-discriminator}
\end{figure}

As the Ikeda attractor has a known iterative solution~\cref{eq:Ikeda-attractor} it is possible to create a numerical discriminator thereof for validation purposes which we now do in this section.
The numerical discriminator that is constructed is able to determine whether a point is close to, or lies on the Ikeda attractor~\cref{eq:Ikeda-attractor}, and exploits the fact that the Ikeda map~\cref{eq:Ikeda-map} has an easily computable inverse.

We first define the inverse mapping $\mathcal{I}^{-1}$ of the Ikeda map by the following procedure,
\begin{equation}\label{eq:Ikeda-Inverse}
\begin{aligned}
    \hat{x}_{1, n} &= (x_{1, n+1} - 1)/u,\\
    \hat{x}_{2, n} &= x_{2, n+1}/u,\\
    t_n &= 0.4 - \frac{6}{1 + \hat{x}_{1, n}^2 + \hat{x}_{2, n}^2},\\
    x_{1, n} &= \hat{x}_{1, n}\cos(t_n) + \hat{x}_{2, n}\sin(t_n),\\
    x_{2, n} &= -\hat{x}_{1, n}\sin(t_n) + \hat{x}_{2, n}\cos(t_n),
    \end{aligned}
\end{equation}
which defines the inverse map as,
\begin{equation}
    \mathcal{I}^{-1}(x_{1, n+1}, x_{2, n+1}) = (x_{1, n}, x_{, n}),
\end{equation}
and constitutes an exact inverse of the forward map~\cref{eq:Ikeda-map}. 
Using this inverse, a family of discriminators $\mathcal{D}_m$ indexed by a iterate value $m$ is defined for an arbitrary point $(x_1,x_2)$ by,
\begin{equation}\label{eq:Ikeda-theory-discriminator}
    \mathcal{D}_m = \begin{cases}
        1 & \mathcal{I}^{-m}(x_1, x_2) \in \mathcal{B}_r\\
        0 & \text{otherwise}
    \end{cases},
\end{equation}
which we now prove in the limit of $m$ is a perfect discriminator of points on the attractor.

\begin{theorem}
    If  a point $(x_1, x_2)$ is on the Ikeda map attractor~\cref{eq:Ikeda-attractor}, meaning that $(x_1, x_2)\in\Lambda$, then for any $m\geq 0$,
    \begin{equation}
        \mathcal{D}_m(x_1, x_2) = 1,
    \end{equation}
    meaning that the discriminator recognizes any point on the attractor for finite $m$.
\end{theorem}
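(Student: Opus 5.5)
The argument is a direct unwinding of the definition of $\Lambda$ in~\cref{eq:Ikeda-attractor}, combined with the fact that $\mathcal{I}^{-1}$ in~\cref{eq:Ikeda-Inverse} is an exact two-sided inverse of $\mathcal{I}$. Fix $m\geq 0$ and $(x_1,x_2)\in\Lambda$. Since $\Lambda=\bigcap_{j\geq 0}\mathcal{I}^j(\mathcal{B}_r)$, the point lies in every term of the intersection, and in particular $(x_1,x_2)\in\mathcal{I}^m(\mathcal{B}_r)$. By the definition of the image, there is some $z\in\mathcal{B}_r$ with $\mathcal{I}^m(z)=(x_1,x_2)$.

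\textbf{Invertibility of $\mathcal{I}$.} The key auxiliary step is to verify that $\mathcal{I}$ is a bijection of $\mathbb{R}^2$ with inverse given by~\cref{eq:Ikeda-Inverse}.
\begin{enumerate}
    \item Write $\mathcal{I}(x)=e_1+u\,R_{t(x)}x$, where $R_t$ is rotation by angle $t$ and $t(x)=0.4-6/(1+|x|^2)$.
    \item Rotations preserve norms, so $|R_{t(x)}x|=|x|$. Hence $\hat x=(\mathcal{I}(x)-e_1)/u$ satisfies $|\hat x|=|x|$.
    \item It follows that $t(\hat x)=t(x)$, so the angle can be recovered from the image point alone.
    \item Then $R_{-t(\hat x)}\hat x=x$, which is exactly the formula in~\cref{eq:Ikeda-Inverse}. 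So $\mathcal{I}^{-1}\circ\mathcal{I}=\mathrm{id}$.
    \item The same norm argument run backwards gives $\mathcal{I}\circ\mathcal{I}^{-1}=\mathrm{id}$.
\end{enumerate}
Consequently $\mathcal{I}^{-m}=(\mathcal{I}^{-1})^m$ is the genuine inverse of $\mathcal{I}^m$ for every $m\geq 0$, with the convention that $\mathcal{I}^0=\mathrm{id}$.

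\textbf{Conclusion.} Applying $\mathcal{I}^{-m}$ to $\mathcal{I}^m(z)=(x_1,x_2)$ gives $\mathcal{I}^{-m}(x_1,x_2)=z\in\mathcal{B}_r$. By~\cref{eq:Ikeda-theory-discriminator}, this means $\mathcal{D}_m(x_1,x_2)=1$. The case $m=0$ is simply $\Lambda\subseteq\mathcal{B}_r$, which is the $j=0$ term of the intersection.

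\textbf{Where care is needed.} The only step that is more than bookkeeping is the injectivity of $\mathcal{I}$, i.e., that the preimage $z$ is unique and coincides with $\mathcal{I}^{-m}(x_1,x_2)$. Without injectivity, $\mathcal{I}^{-m}$ might select a different preimage lying outside $\mathcal{B}_r$. The norm-preservation observation in steps 2–3 settles this, because $t$ depends only on $|x|$ and $|x|$ is determined by the image.

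I would also remark that the argument never uses the specific radius $r=\sqrt{1/(1-u)}$, nor that $\mathcal{I}(\mathcal{B}_r)\subseteq\mathcal{B}_r$. That forward-invariance property is what makes the sets $\mathcal{I}^j(\mathcal{B}_r)$ nested. Nestedness would matter only for a converse statement, namely that $\mathcal{D}_m\to\mathbf{1}_\Lambda$ as $m\to\infty$, and not for the claim here.
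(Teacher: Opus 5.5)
Your proof is correct and follows the paper's argument: $\Lambda\subseteq\mathcal{I}^m(\mathcal{B}_r)$ gives a preimage $z\in\mathcal{B}_r$, which $\mathcal{I}^{-m}$ recovers. You also verify, via $|R_{t(x)}x|=|x|$, the two-sided invertibility of $\mathcal{I}$, which the paper only asserts when it writes $\mathcal{I}^{-m}(\mathcal{I}^m(\mathcal{B}_r))=\mathcal{B}_r$.

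One side claim in your closing remark is wrong, though harmless: $\mathcal{I}(\mathcal{B}_r)\subseteq\mathcal{B}_r$ fails for $r=\sqrt{10}$, $u=0.9$. Taking $|x|=r$ with $R_{t(x)}x=re_1$ gives $|\mathcal{I}(x)|=1+ur>r$, and invariance of the ball would require $r\geq 1/(1-u)$. As you say, the theorem never uses this property.
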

\begin{proof}
If  $(x_1, x_2)\in\Lambda$ then by definition~\cref{eq:Ikeda-attractor},
\begin{equation}
    (x_1, x_2)\in\mathcal{I}^m(\mathcal{B}_r).
\end{equation} 
as,
\begin{equation}
    \mathcal{I}^{-m}(\mathcal{I}^m(\mathcal{B}_r)) = \mathcal{B}_r,
\end{equation}
then $\mathcal{D}(x_1, x_2) = 1$ as required.
\end{proof}
\begin{corollary}
Without proof, conversely, as $m\to\infty$, the inverse map of $\{1\}$ approaches the attractor~\cref{eq:Ikeda-attractor},
    \begin{equation}
        \lim_{m\to\infty} \mathcal{D}_m^{-1}(\{1\}) = \Lambda,
    \end{equation}
    meaning that in the $m$ limit the discriminator exactly recognizes the attractor.
\end{corollary}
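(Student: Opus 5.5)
Write $A_m := \mathcal{D}_m^{-1}(\{1\}) = \{x : \mathcal{I}^{-m}(x)\in\mathcal{B}_r\}$. The plan has three steps: identify $A_m$ as the forward image of the ball, show these images form a decreasing nested family, and then invoke the elementary fact that a decreasing sequence of sets converges, in the set-theoretic sense, to its intersection. That intersection is $\Lambda$ by~\cref{eq:Ikeda-attractor}.

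\textbf{Step 1.} Because~\cref{eq:Ikeda-Inverse} is an exact two-sided inverse of~\cref{eq:Ikeda-map}, $\mathcal{I}$ is a bijection of $\mathbb{R}^2$. Hence $x\in A_m$ if and only if $x=\mathcal{I}^m(z)$ for some $z\in\mathcal{B}_r$, which gives $A_m=\mathcal{I}^m(\mathcal{B}_r)$. This also recovers the preceding theorem: $\Lambda=\bigcap_m A_m\subseteq A_m$ for every $m$.

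\textbf{Step 2 (forward invariance).} From~\cref{eq:Ikeda-map}, $\mathcal{I}(x)-(1,0)^T$ equals $u$ times a rotation of $x$. Therefore
\begin{equation*}
\|\mathcal{I}(x)\|\le 1+u\|x\|.
\end{equation*}
For any ball $\mathcal{B}_R$ with $1+uR\le R$, i.e.\ $R\ge 1/(1-u)$, this gives $\mathcal{I}(\mathcal{B}_R)\subseteq\mathcal{B}_R$. Applying $\mathcal{I}^m$ then yields
\begin{equation*}
\mathcal{I}^{m+1}(\mathcal{B}_R)\subseteq\mathcal{I}^m(\mathcal{B}_R),
\end{equation*}
so the sets are nested and decreasing. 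Their $\liminf$ and $\limsup$ therefore coincide with $\bigcap_m\mathcal{I}^m(\mathcal{B}_R)$, and the limit exists and equals $\Lambda$. If one also wants convergence in the Hausdorff sense, note that the sets are compact, since $\mathcal{I}$ is continuous and $\mathcal{B}_R$ is compact. A decreasing sequence of nonempty compacts converges in the Hausdorff metric to its intersection, by a standard finite-intersection argument.

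\textbf{Step 3 and the main obstacle.} The radius actually used in the paper is $r=\sqrt{1/(1-u)}$, which for $u=0.9$ is $\sqrt{10}\approx 3.16$. This is smaller than $1/(1-u)=10$, and indeed $1+ur\approx 3.85>r$, so the Step 2 estimate does not show that $\mathcal{B}_r$ is forward invariant. Nestedness of the $A_m$ is exactly what I expect to be the hard part.

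I would first try to close this gap by showing that some iterate $\mathcal{I}^{k}(\mathcal{B}_R)$ already lies inside $\mathcal{B}_r$. The mechanism is the contraction $\|\mathcal{I}(x)-(1,0)^T\|= u\|x\|$ together with the strong rotation $t_n$. If that holds, then for $m\ge k$,
\begin{equation*}
\mathcal{I}^{m}(\mathcal{B}_R)\subseteq \mathcal{I}^{m-k}(\mathcal{B}_r)\subseteq\mathcal{I}^{m-k}(\mathcal{B}_R).
\end{equation*}
This sandwiches $A_{m}$ between two nested sequences with the same intersection, so $\lim A_m$ exists and equals the intersection.

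Failing that, I would state the corollary with $r$ replaced by any $R\ge 1/(1-u)$, for which Step 2 applies verbatim. I would then argue that $\Lambda\subseteq\mathcal{B}_r$ with the paper's $r$, so that the intersection defining $\Lambda$ is unchanged. That containment would be justified by the numerically observed extent of the attractor, roughly $\|x\|\lesssim 2.6<\sqrt{10}$.
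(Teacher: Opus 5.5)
The paper gives no argument for this corollary, so your proposal has to stand on its own. Steps 1 and 2 are sound: $A_m=\mathcal{I}^m(\mathcal{B}_r)$, and for any $R\ge 1/(1-u)$ the compact sets $\mathcal{I}^m(\mathcal{B}_R)$ decrease to $\Lambda_R:=\bigcap_m\mathcal{I}^m(\mathcal{B}_R)$, both set-theoretically and in the Hausdorff metric.

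\textbf{The gap is Step 3, and it is not a missing estimate.} Both of your repairs need the maximal invariant set $\Lambda_R$ to lie inside $\mathcal{B}_{\sqrt{10}}$. The sandwich needs $\mathcal{I}^k(\mathcal{B}_R)\subseteq\mathcal{B}_r$, and the fallback needs $\Lambda_R\subseteq\mathcal{B}_r$. This is false. $\Lambda_R$ contains every point whose whole backward orbit stays in $\mathcal{B}_R$, so it contains every fixed point of norm at most $R$, not just the chaotic attractor whose extent ($\lesssim 2.6$) you cite.

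Identify $(x_1,x_2)$ with $z=x_1+ix_2$. Fixed points satisfy $z=1/(1-ue^{it})$ with $t=0.4-6/(1+|z|^2)$, so $|z|^2=(1-2u\cos t+u^2)^{-1}$. For $u=0.9$ this has a root $|z|^2\approx 26$, giving a fixed point $p\approx(2.97,4.15)$ with $\|p\|\approx 5.10$. The Jacobian determinant of the map is $u^2$ everywhere, and its trace at $p$ is about $1.70<2u$. Hence the eigenvalues at $p$ are complex of modulus $0.9$, and $p$ is an attracting fixed point coexisting with the chaotic attractor. Since $p=\mathcal{I}^k(p)\in\mathcal{I}^k(\mathcal{B}_R)$ for every $k$, no iterate of $\mathcal{B}_R$ fits inside $\mathcal{B}_r$, and $\Lambda_R\neq\Lambda$.

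With the paper's $r=\sqrt{10}$, the obstacle you flagged is real and cannot be removed: the corollary itself fails. First, $\mathcal{I}$ rotates each circle $\|x\|=\rho$ rigidly. So $\mathcal{I}(\mathcal{B}_r)$ is exactly the closed disk of radius $ur$ about $(1,0)$, and $A_1\not\subseteq A_0$.

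More seriously, $\Lambda=\bigcap_m A_m$ is only backward invariant. The saddle $q\approx(1.11,-2.29)$ has $\|q\|\approx 2.54$ and eigenvalues about $1.66$ and $0.49$. It lies in $\Lambda$, and so does its local unstable manifold. Numerically, the branch leaving $q$ along the direction $\approx(0.94,-0.34)$ exits $\mathcal{B}_r$ within a few iterates and then spirals into $p$. Take $w$ on this branch with $w\in\Lambda$ and $\mathcal{I}(w)\notin\mathcal{B}_r$; for instance, the last point before the first exit. Then $\mathcal{I}(w)\in A_m$ for every $m\ge 1$ but $\mathcal{I}(w)\notin\Lambda$, so $\liminf_m A_m\supsetneq\Lambda$. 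Moreover, $\mathcal{I}^m(w)\in A_m$ converges to $p$, which is at distance at least $\|p\|-r\approx 1.9$ from $\Lambda\subseteq\mathcal{B}_r$. So Hausdorff convergence fails as well.

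What your Steps 1--2 genuinely prove is the version with $r$ replaced by $R\ge 1/(1-u)$. Its limit is the global attractor $\Lambda_R$, which contains the sink and the saddle-to-sink connection. That set is neither the paper's $\Lambda$ nor the chaotic attractor the discriminator is meant to recognize.
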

In practice, a finite value of $m$ can be used to construct an approximate discriminator that is computationally tractable. 
Applying this discriminator to samples drawn from a Gaussian distribution yields a highly non-Gaussian truncated distribution supported primarily on the attractor, as shown in figure~\cref{fig:Ikeda-map-and-discriminator}.
This theory-driven discriminator~\cref{eq:Ikeda-theory-discriminator} provides a baseline against which the normalizing-flow-based discriminators defined later can be evaluated.

%%%%%%%%%%%%%%%%%%%%%%%%%%%%%%%%%%%%%
\subsection{Lorenz '63}
\label{sec:lorenz63-system}
%%%%%%%%%%%%%%%%%%%%%%%%%%%%%%%%%%%%%

The Lorenz '63 system~\citep{lorenz63,lorenz1996essence} is a three-dimensional model of nonlinear dynamics given by,
\begin{equation}
\label{eq:lorenz63}
\begin{aligned}
\frac{dx}{dt} &= \sigma(y - x), \\
\frac{dy}{dt} &= x(\rho - z) - y, \\
\frac{dz}{dt} &= xy - \beta z,
\end{aligned}
\end{equation}
originally introduced as a simplified representation of thermal convection in the atmosphere.
Despite its low dimensionality, the system exhibits complex chaotic behavior and has become a canonical benchmark for the study of deterministic chaos, data assimilation, and sequential filtering~\citep{reich2015probabilistic}.

In this work, we take the  canonical parameter values \( \rho = 28 \), \( \sigma = 10 \), and \( \beta = \frac{8}{3} \). 
For this parameter regime, the Lorenz '63 equations are chaotic and thus have a corresponding chaotic attractor which we again take to be physical.
The associated attractor has a  Kaplan–Yorke dimension of approximately $2.06$~\citep{taylor2010attractors}, indicating that the long-term dynamics are slightly higher than two dimensional.

Unlike the attractor of the Ikeda map~\cref{eq:Ikeda-attractor}, simple to compute expressions for the chaotic attractor of the Lorenz '63 equations are not known, thus more complex solutions have to be utilized.

%%%%%%%%%%%%%%%%%%%%%%%%%%%%%%%%%%%%%%%%%%%%%%%%%%%%%%%
\section{Normalizing Flow Based Discriminators}
%%%%%%%%%%%%%%%%%%%%%%%%%%%%%%%%%%%%%%%%%%%%%%%%%%%%%%%

Many dynamical systems do not admit an explicit or easily computable attractors comparable to that available for the Ikeda map in~\cref{eq:Ikeda-attractor}, in particular, the Lorenz '63 equations in~\cref{sec:lorenz63-system}.
Constructing a theory-driven discriminator for its physicality is substantially more challenging: while it is straightforward to generate samples on or near the attractor through time integration, there is no nicely computable description of the attractor's support from which to draw negative examples, as in non-attractor samples.
This motivates a data-driven approach in which a model is trained directly on attractor data using one-class learning to provide a discriminator for its physicality.
Normalizing flows~\citep{papamakarios2021normalizing, kobyzev2020normalizing} are well suited to this task because they can be trained using only samples from the attractor.

A normalizing flow is a diffeomorphism $T_\theta \colon \mathbb{R}^n \to \mathbb{R}^n$ parameterized by $\theta$ such that both such that both the forward map $T_\theta$ and its inverse $T_\theta^{-1}$ are efficiently computable.
As the flow is invertible, the log-determinant of the inverse Jacobian,
\begin{equation}
    \begin{gathered}
        \log|\det J_{T_\theta^{-1}}(x)|,\\
        \text{where } J_{T_\theta^{-1}}(x) = \frac{\partial T_\theta^{-1}}{\partial x},
    \end{gathered}
\end{equation}
is also computable. Therefore, given a simple base density $p_u(u)$, such as a normal Gaussian distribution, the change-of-variables formula induces a density on $x = T_\theta(u)$ of the form,
\begin{equation}\label{eq:change-of-variables}
    p_x(x; \theta) = p_u(T_\theta^{-1}(x)) \left|\det J_{T_\theta^{-1}}(x)\right|,
\end{equation}
which is a flexible construction of potentially any density of interest, given a  sufficiently flexible $T_\theta$.

In this work, the transformation $T_\theta$ is implemented as a composition of $L$ coupling layers interleaved with permutation matrices. 
Each coupling layer uses a rational quadratic spline (RQS) bijector~\citep{durkan2019neural} with $K$ bins, parameterized by a masked conditioner network.
The $\ell$-th coupling layer partitions $x \in \mathbb{R}^n$ via binary mask $m^{(\ell)} \in \{0,1\}^n$ into masked components $x_{\text{mask}} = x \odot m^{(\ell)}$ and transformed components $x_{\text{trans}} = x \odot (1 - m^{(\ell)})$. A conditioner network $c_\phi^{(\ell)}: \mathbb{R}^{|m^{(\ell)}|} \to \mathbb{R}^{(n - |m^{(\ell)}|)(3K+1)}$ outputs spline parameters:
\begin{equation}
    \psi^{(\ell)} = c_\phi^{(\ell)}(x_{\text{mask}}),
\end{equation}
where $c_\phi^{(\ell)}$ is a multi-layer perceptron with depth $D$, hidden dimension $W$, and GELU activation. The RQS bijector applies element-wise transformations to $x_{\text{trans}}$ using parameters $\psi^{(\ell)}$, yielding the output $y_{\text{trans}}$ and log-determinant contribution $\log|\det J_{\text{RQS}}|$.

Between coupling layers, we apply PLU (permutation-lower-upper) factorization matrices $(PLU)^{(\ell)}$ to permute coordinates.
The final transformation includes a scaling factor $s \in \mathbb{R}^+$ to regulate the overall output range.
The complete forward transformation is defined as,
\begin{equation}
    T_\theta(u) = \bigcirc_{\ell = 1}^{n} (PLU)^{\ell} \circ (RQS)^{\ell}.
\end{equation}

\subsection{Training via forward invariance}

Let $\{x_i\}_{i=1}^N$ denote samples drawn from either the attractor or a set of physical states of the dynamical system of interest.
The normalizing flow parameters $\theta$ are estimated by maximum likelihood, with the following objective:
\begin{equation}\label{eq:nf-objective}
    \mathcal{L}(\theta) = \frac{1}{N}\sum_{i=1}^N \left[\log p_u(T_\theta^{-1}(x_i)) + \log\left|\det J_{T_\theta^{-1}}(x_i)\right|\right].
\end{equation}
In order to generate training data, we exploit the fact that physical inputs to a dynamical system produce physical outputs.
In the context of an attractor this is generally known as the forward invariance property: if $\Lambda$ is the attractor, and $\Phi(t, \cdot)$ are the dynamics propagating the state forward in time $t$ time units, then $\Phi(t, \Lambda) \subset \Lambda$.

For each of the Ikeda map~\cref{sec:Ikeda-map} and Lorenz '63~\cref{sec:lorenz63-system} we perform the following: an initial batch of $N=100$ points is sampled from a Gaussian distribution centered at a point known to lie near the attractor,

\begin{equation}
    x_i^{(0)}\sim \mathcal{N}(x_0, \sigma^2 I),
\end{equation}
where $x_0$ is (numerically) on the attractor and $\sigma$ is some small initial perturbation.
This batch is then evolved forward for a sufficiently long time $T_{\text{init}}=100$ to drive the ensemble onto the attractor.
For each training epoch $k$, the following steps are performed:
\begin{enumerate}
    \item Propagate the batch one step forward:
    \begin{equation}
        x_i^{(k+1)} = \Phi(\Delta t, x_i^{(k)})
    \end{equation}
    with $\Delta t = 1.0$,
    \item Evaluate the likelihood-based loss,
    \begin{equation}
        \mathcal{L}(\theta; \{x_i^{(k+1)}\}_{i=1}^N),
    \end{equation}
    via~\cref{eq:nf-objective} using the current flow parameters $\theta$,
    \item Update the parameters: 
    \begin{equation}
        \theta \leftarrow \theta - \alpha \nabla_\theta \mathcal{L},
    \end{equation} using stochastic gradient descent.
\end{enumerate}

For the Ikeda map in~\cref{sec:Ikeda-map}, $\Phi(\Delta t, x_i^{(k)})$ corresponds to $\Delta t$ iterations of the map~\cref{sec:Ikeda-map}, while for the Lorenz '63 equations in~\cref{sec:lorenz63-system}, $\Phi(\Delta t, x_i^{(k)})$ corresponds to an approximation using the Tsit5~\citep{tsitouras2011runge} Runge-Kutta solver with a constant step size of $h=0.01$.
As the number of iterations grows, $k\to\infty$, the empirical distribution of $x_i^{(k)}$ converges to the invariant measure supported on the attractor. 
The flow therefore is trained to approximate this measure.

In the numerical experiments, the initial Gaussian for the Ikeda map is chosen as,
\begin{equation}
    x_i^{(0)} \sim \mathcal{N}\left(\begin{bmatrix}
        1.25\\0.0
    \end{bmatrix}, \frac{1}{128} I_2\right),
\end{equation}
and for the For Lorenz '63 equations the initial Gaussian is chosen as, 
\begin{equation}
    x_i^{(0)} \sim \mathcal{N}\left(\begin{bmatrix}
        8\\0\\0
    \end{bmatrix}, I_3\right),
\end{equation}
where $I_2$ and $I_3$ are identity matrices of the corresponding size.
The initial points are then flowed $100$ time units or map iterations forwards in order to ensure that they are, numerically, on the attractor, are are thus deemed physical for the purposes of this work.
The base density $p_u$ is taken to be a standard Gaussian,
\begin{equation}
    p_u(x) = \mathcal{N}(x ; 0, I_n),
\end{equation}
where $n$ is the dimension of the system.

The hyperparameters of the flow are selected via a grid search, with three random initializations for each configuration.
The following ranges are explored:
\begin{itemize}
    \item Conditioner depth: $D \in \{8, 16, 32\}$
    \item Conditioner hidden dimension: $W \in \{64, 128, 256\}$
    \item RQS bins: $K \in \{4, 8, 16\}$
\end{itemize}
The number of coupling layers is fixed at $L=6$.
For each hyperparameter setting, the model is trained for $20\,000$ epochs, and the configuration yielding the lowest final testing loss $L(\theta)$ is selected, resulting in optimized parameters $\theta^*$.

Once trained, the normalizing flow defines a density $p_x(x; \theta^*)$ that approximates our information about the physicality of the system. 
This density is then used to construct a discriminator of physicality:
\begin{equation}\label{eq:nf-discriminator}
    \mathcal{D}_{\text{NF}}(x) = \begin{cases}
        1 & p_x(x; \theta^*) \geq \tau\\
        0 & p_x(x; \theta^*) < \tau
    \end{cases},
\end{equation}
where $\theta^*$ are the optimized parameters and $\tau$ is a density threshold. We set $\tau$ as the $1\%$ quantile of $\{p_x(x_i; \theta^*)\}_{i=1}^M$ for $M = 10^4$ validation samples drawn from the attractor via extended simulation, ensuring high recall on physical points while rejecting non-physical samples.

%%%%%%%%%%%%%%%%%%%%%%%%%%%%%%%%%%%%%
\section{Sequential Filtering Experiments}
%%%%%%%%%%%%%%%%%%%%%%%%%%%%%%%%%%%%%

\begin{figure}[]
\centering
\centerline{\includegraphics[width=0.8\linewidth]{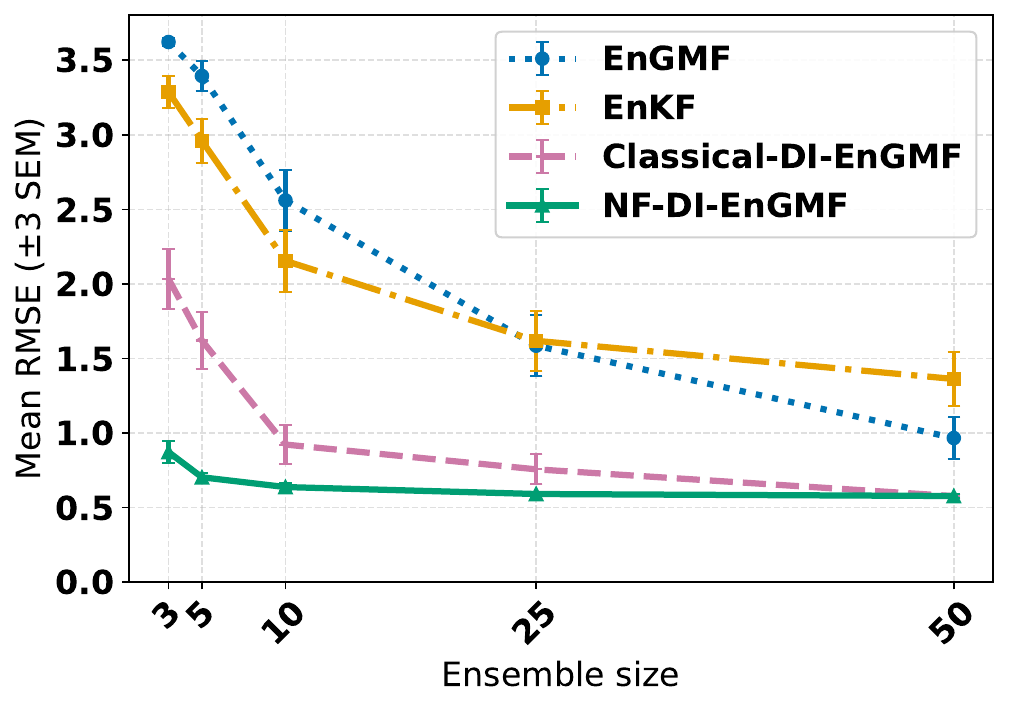}}
\caption{For the Ikeda map. For a bandwidth scaling factor~\cref{eq:Silverman-bandwidth} of $s_\beta = 1$: A comparison of the RMSE~\cref{eq:RMSE} for the DI-EnGMF (square markers and blue dotted line), EnGMF (triangle markers and orange dash-dotted line), and the EnKF (circle markers with green solid line) for ensemble sizes in the range $N=3$ to $N=20$. The value plotted is the mean over all Monte Carlo iterations and the vertical lines represent one standard deviation of RMSE along the Monte Carlo runs.}
\label{fig:rmse-plot-Ikeda}
\end{figure}

\begin{figure}[]
\centering
\includegraphics[width=0.8\linewidth]{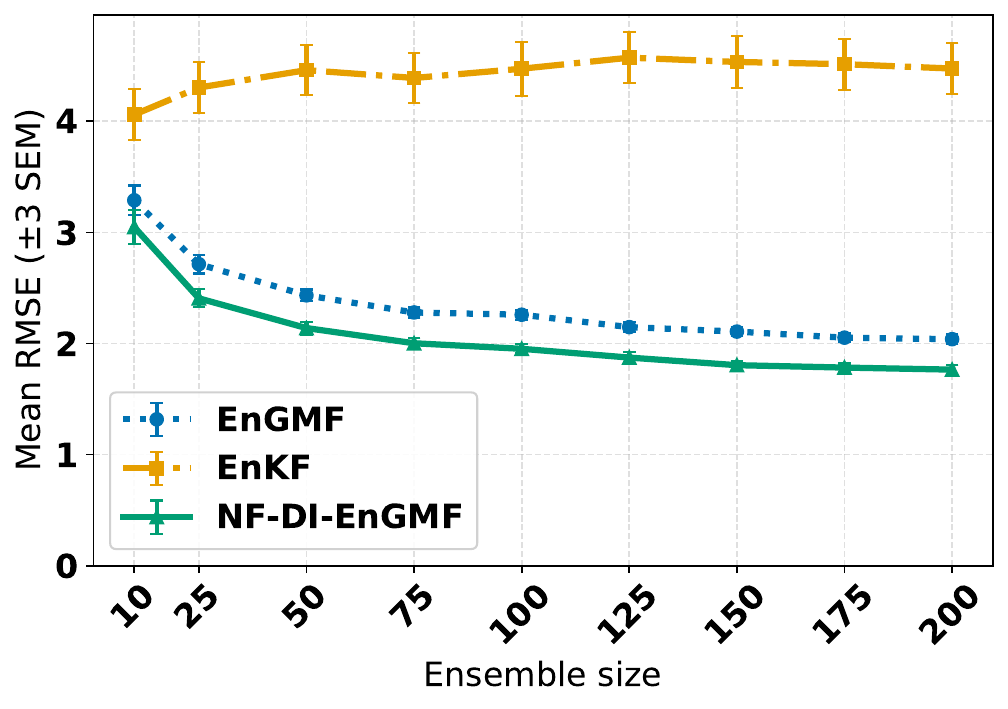}
\caption{For the Lorenz '63 equations, a comparison of the RMSE~\cref{eq:RMSE} for the DI-EnGMF (triangle markers and green solid line), EnGMF (circle markers and blue dotted line), and the EnKF (square markers with orange dash-dotted line) for ensemble sizes in the range $N=10$ to $N=200$. The value plotted is the mean over all Monte Carlo iterations and the vertical lines represent three standard errors of the mean (SEM) of the  RMSE along the Monte Carlo runs.}
\label{fig:rmse-plot-Lorenz-63}
\end{figure}

To assess the performance of the proposed method, a series of sequential filtering experiments is conducted comparing the spatial–temporal root-mean-square error (RMSE) of three algorithms: the ensemble Gaussian mixture filter (EnGMF), the ensemble Kalman filter (EnKF), and the discriminator-informed ensemble Gaussian mixture filter (DI-EnGMF), which is the main contribution of this work.
The EnKF variant considered here is the linearized ensemble Kalman filter~\citep{evensen1994sequential,burgers1998analysis,michaelson2023ensemble}, with a constant forecast ensemble inflation factor of $1.01$.

The performance metric that we use in this work is the spatio-temporal RMSE, and is defined as
\begin{equation}\label{eq:RMSE}
    \operatorname{RMSE}(x^{\text{true}}, E_{X^{+}}) = \mathbb{E}_{\operatorname{MC}}\left[\sqrt{\frac{1}{Kn} \sum_{k=1}^K \sum_{j = 1}^{n} \left(x^{\text{true}}_{j} - \left(\frac{1}{N} \sum_{i = 1}^{N} x_{j,i}^{+}\right)\right)^{2}}\right],
\end{equation}
where $n$ is the number of spatial dimensions, $K$ is the number of steps after spinup (defined later), and is computed in expectation over several Monte Carlo runs that modify the initial conditions and the measurements.

We perform two sequential filtering experiments, one with the Ikeda map in~\cref{sec:Ikeda-map} and another with the Lorenz '63 system in~\cref{sec:lorenz63-system}. 

\subsection{Ikeda Map Experiment}

In the first experiment, our prediction step is given by the noise-free Ikeda forward equations~\cref{eq:Ikeda-map} and the update steps are given by each of the filters respective update equation. We use a non-linear scalar range measurement,
\begin{equation}
    y = \sqrt{x_1^2 + x_2^2} + \eta,\quad\eta\sim\mathcal{N}(0, R),
\end{equation}
with scalar variance $R=4$. We run sequential filtering for $1100$ iterations with $100$ time steps discarded to account for filter spinup. 
We initialize our true state at,
\begin{equation}
    x_1^{\text{true}} = 1.25, x_2^{\text{true}} = 0,
\end{equation}
with the initial prior ensemble as a normally distributed family,
\begin{equation}
    E_{X^-} = x^{\text{true}} + \nu,\quad \nu \sim \mathcal{N}(0, 0.25 I_2).
\end{equation}
We look at two different discriminators for this system. First we look at what we call the classical discriminator defined by the attractor as in~\cref{eq:Ikeda-theory-discriminator}, where the number of iterations is set to $m=6$, we term the resulting algorithm the Classical-DI-EnGMF. Second we look at a normalizing flow-based discriminator as in~\cref{eq:nf-discriminator}, which we term the NF-DI-EnGMF.
The model with the lowest testing error had a conditioner depth of $D = 32$, conditioner hidden dimension of $W=256$, and RQS bins parameter of $K=4$.
The results are averaged over a total of $32$ Monte Carlo iterations, and a standard error of the mean (SEM) is also computed.

The results in for this experiment can be seen in~\cref{fig:rmse-plot-Ikeda}.
As can be seen, on this setting, the EnKF, which is a linear filter, performs quite poorly for all ensemble settings having smallest error that is just below 1.5 in terms of RMSE. 
Additionally the EnKF has very large error bars in terms of 3-$\sigma$ of SEM, meaning that the results are highly unstable.
The EnGMF performs even worse than the EnKF for very low ensemble sizes, very likely due to the fact that most samples that a Gaussian mixture of only three to ten components would generate samples that are non-physical, validating the claims presented in this work.
The Classical-DI-EnGMF is the second best performing filter in the case of small ensemble size, again validating the fact that generating more meaningful samples results in a more meaningful forecast which in turn results in physical samples, and thus less error.

What is interesting is that the normalizing flow-based discriminator performs better than that of the classical discriminator.
This is likely due to the fact that the normalizing flow is able to learn a better approximation to the true attractor than even an approximation of~\cref{eq:Ikeda-theory-discriminator} through a finite $m$. 
This likely means that the normalizing flow is as capable as a classical discriminator with a significantly higher $m$ value.
Another more interesting result is that the normalizing flow DI-EnGMF is effectively at its lowest error with just $N=5$ particles and is capable of producing great results with just $N=3$ particles. 

\subsection{Lorenz '63 Experiment}

In the second experiment, our prediction step is given by the noise-free Lorenz equations~\cref{eq:lorenz63} and the update steps are given by each of the filters respective update equation. We use a non-linear scalar range measurement,
\begin{equation}
\begin{gathered}
    y = \sqrt{\left(x_1 - 6 \sqrt{2}\right)^2 + \left(x_2 - 6\sqrt{2}\right)^2 + \left(x_3 - 27\right)^2} + \eta, \quad\eta\sim\mathcal{N}(0, R),
    \end{gathered}
\end{equation}
with scalar variance $R=4$. We run sequential filtering for $550$ iterations spaced by $\Delta t = 0.12$ time units with $50$ steps discarded to account for filter spinup. 
We initialize our true state at $(8, 0, 0)$ with the initial prior ensemble as a normally distributed family $x^{-}_{i} = x^t + \nu$ with $\nu \sim \mathcal{N}(0, I)$. 
The scaling factor of the Silverman bandwidth~\cref{eq:Silverman-bandwidth}, is set of the nominal $s_\beta = 1$.
We look at only  one discriminator for this system, which is a normalizing flow-based discriminator as in~\cref{eq:nf-discriminator}, which we again term the NF-DI-EnGMF.
The model with the lowest testing error had a conditioner depth of $D = 8$, conditioner hidden dimension of $W=256$, and RQS bins parameter of $K=8$.
The results are averaged over a total of $32$ Monte Carlo iterations, and a standard error of the mean (SEM) is also computed.

We compare how the RMSE changes according to the size of the ensemble, ranging over $N=10, \dots, 200$. The results can be seen in~\cref{fig:rmse-plot-Lorenz-63}.
As can be seen, the EnKF is not capable of producing meaningful results in this highly non-linear setting. The EnGMF, is consistently higher in error than the NF-DI-EnGMF, though not by much, meaning that while an improvement for this setting is seen, the non-physical states likely tend towards the attractor of the system exponentially fast, meaning that non-physical states rapidly become physical. 

%%%%%%%%%%%%%%%%%%%%%%%%%%%%%%%%%%%%%
\section{Conclusions}
\label{sec:conclusions}
%%%%%%%%%%%%%%%%%%%%%%%%%%%%%%%%%%%%%

This work developed a discriminator-informed ensemble Gaussian mixture filter (DI-EnGMF) that explicitly incorporates prior knowledge about the physicality of a given dynamical system, rejecting particles that are non-physical.
The discriminators in this work are learned through a normalizing flow approach.

The numerical experiments demonstrate that the DI-EnGMF  improves upon the standard EnGMF in the low-particle number regime, where sample scarcity often degrades performance.
In particular, the discriminator reduces particles that are non-physical, leading to lower errors for a given particle number. 
This improvement has direct computational implications in applications where the forecast model is expensive: achieving a target level of accuracy with fewer ensemble members translates into substantial savings in computational cost.
Moreover, by preserving the asymptotic convergence properties of the EnGMF, the DI-EnGMF accelerates convergence toward the true Bayesian posterior as the particle number increases.

Future work will focus on extending this framework to higher-dimensional and more complex systems where the geometry of physically admissible regions is harder to characterize.
A natural next step is the application to the Lorenz '96 equations~\citep{lorenz1996predictability}  and to multi-scale geophysical systems, where both strong nonlinearity and high dimensionality pose significant challenges to particle filtering. 
Preliminary experiments suggest that a straightforward transplantation of the present approach to Lorenz '96 does not yield substantial error reductions, meaning that additional methodological improvements need to be developed.
These may include more expressive or problem-adapted discriminators, tighter integration between the discriminator and the proposal mechanism, or hybrid strategies that also take advantage of localization and covariance regularization techniques.

More broadly, an important research direction is the development of methods for learning physicality discriminators when no explicit convergent test for physicality is available. 
Normalizing flows, used herein, constitute one such avenue, but there but other generative~\citep{goodfellow2014generative} and discriminative architectures, as well as variational autoencoders~\citep{kingma2013auto}. 
Ultimately, the goal is to construct filtering algorithms that are not only consistent and computationally efficient, but also aligned with the underlying physics, even in complex, high-dimensional applications.

\section*{Acknowledgments}
A large language model interface, Perplixity.ai was used for improving language and clarity during the editing process set on ``best model'' mode.

\printcredits

%% Loading bibliography style file
%\bibliographystyle{model1-num-names}
\bibliographystyle{cas-model2-names}

% Loading bibliography database
\bibliography{bibfiles/banana,bibfiles/covarianceshrinkage,bibfiles/em,bibfiles/engmf,bibfiles/filteringgeneral,bibfiles/kernelapproximation,bibfiles/misc,bibfiles/multifidelity, bibfiles/enkf, bibfiles/discriminator_engmf}

%\vskip3pt
\newpage
\bio{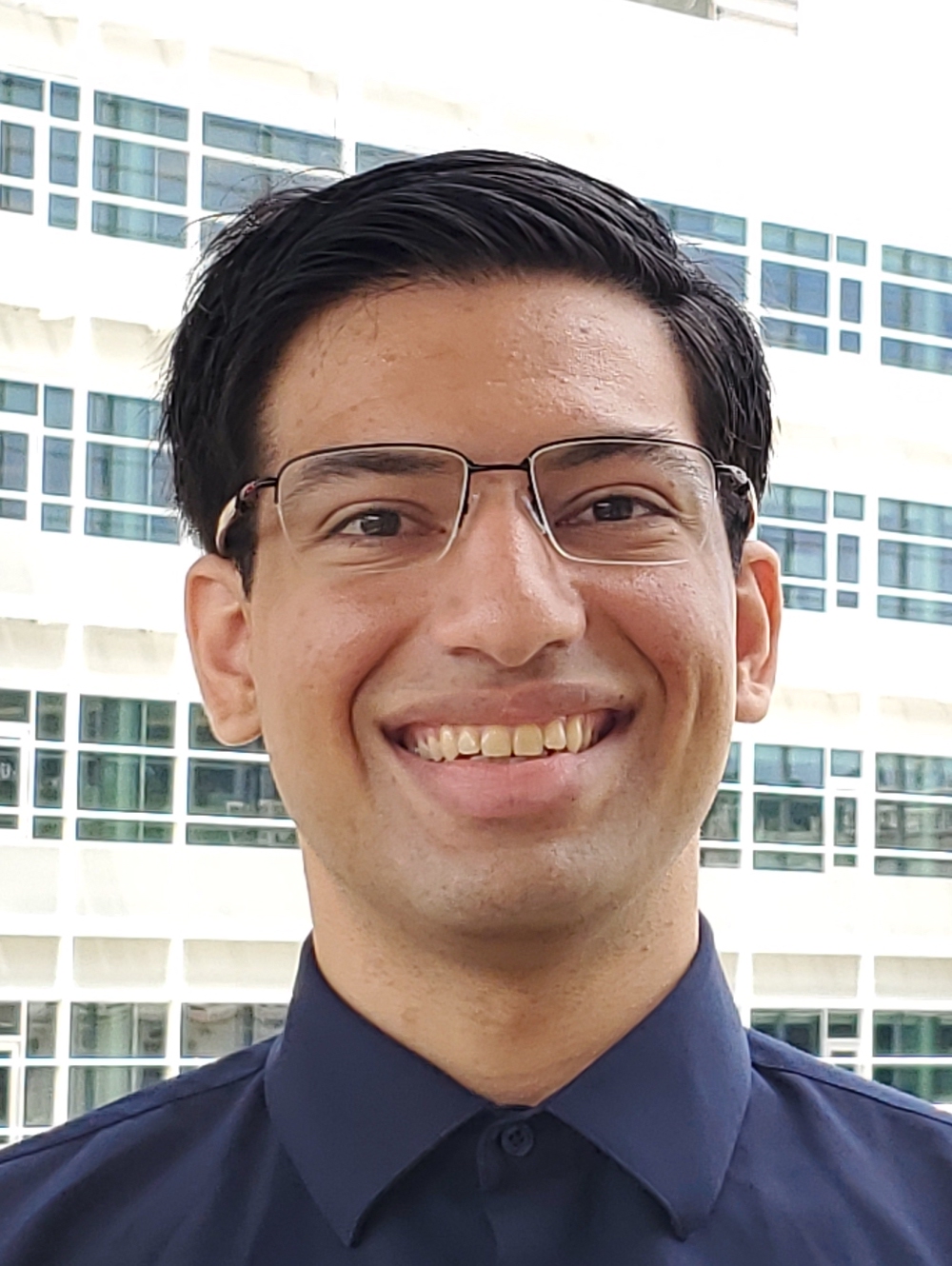}
Zain Jabbar received his B.S. in Mathematics from the University of Hawai'i at M\=anoa, Honolulu, HI.
He is currently a Ph.D. candidate in the Department of Mathematics at the University of Hawai'i at M\=anoa, advised by Dr.\ Andrey A.\ Popov.
His research interests include stochastic filtering, machine learning, and multi-target tracking.
\endbio

\vskip3pc

\bio{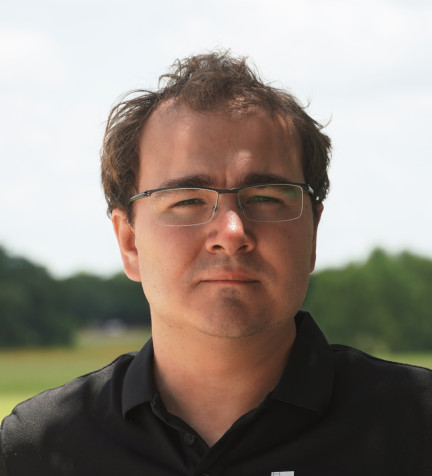}
Andrey A. Popov received his B.S. in Mathematics from RPI, Troy, NY, and his Ph.D. in Computer Science from Virginia Tech, and was a postdoctoral fellow at the Oden Institute for Computational Engineering and Sciences at the University of Texas at Austin.
He is currently an Assistant Professor at the Department of Information \& Computer Sciences at the University of Hawai'i at Manoa.
He has authored over 44 peer-reviewed papers in journals an conferences including in SIAM SISC and FUSION.
In 2024 he was awarded the Jean-Pierre Le Cadre prize for the best paper at the International Conference on Information Fusion (FUSION).
His primary research focuses on fusing theory-driven and data-driven methods for scientific applications, data assimilation, and aerospace problems. His other research interests include directional statistics, and geographic information systems.
\endbio

\end{document}